\newcommand{\bitem}{\begin{itemize}}
\newcommand{\eitem}{\end{itemize}}
\newcommand{\benum}{\begin{enumerate}}
\newcommand{\eenum}{\end{enumerate}}
\newcommand{\beq}{\begin{equation}}
\newcommand{\eeq}{\end{equation}}
\newcommand{\Ze}{\mathbf{0}}
\begin{document}

\title{\Large Optimization for Compressed Sensing: the Simplex Method and
	Kronecker Sparsification}

\titlerunning{Optimization for Compressed Sensing} 


\author{
Robert Vanderbei
\\ Han Liu
\\ Lie Wang
\\ Kevin Lin
	\thanks{The first author's research is supported by ONR  Award
		N00014-13-1-0093, the second author's by  NSF Grant
			III--1116730, and the third author's by NSF Grant
			DMS-1005539}
}


\institute{Robert J. Vanderbei \at
	Department of Ops. Res. and Fin. Eng.,
	Princeton University,
	Princeton, NJ 08544. \\
              Tel.: +609-258-2345 \\
              \email{rvdb@princeton.edu}           
}

\date{Received: date / Accepted: date}

\maketitle

\begin{abstract}
In this paper we present two new approaches to efficiently solve
large-scale compressed sensing problems.
These two ideas are independent of each other and can therefore be used either
separately or together.  We consider all possibilities.

For the first approach, we note that the zero vector can be taken as the initial basic
(infeasible) solution for the linear programming problem and therefore, if the
true signal is very sparse, some variants of the simplex method can be
expected to take only a small number of pivots to arrive at a solution.  We
implemented one such variant and demonstrate a dramatic improvement in
computation time on very sparse signals.

The second approach requires a redesigned sensing mechanism  in which the
vector signal is stacked into a matrix.  This allows us to exploit the Kronecker compressed
sensing (KCS) mechanism.  We  show that the Kronecker sensing requires stronger
conditions for perfect recovery compared to the original vector problem.
However, the Kronecker sensing, modeled correctly, is a much sparser
linear optimization problem.  Hence, algorithms that benefit from sparse problem
representation, such as interior-point methods, can solve the Kronecker
sensing problems much faster than the corresponding vector problem.  In our numerical
studies, we demonstrate a ten-fold improvement in the computation time.

\keywords{Linear programming\and 
	compressed sensing \and
	parametric simplex method \and
	sparse signals \and
	interior-point methods \and
}
\subclass{MSC 65K05 \and 62P99}
\end{abstract}

\section{Introduction.}
Compressed sensing aims to recover a sparse signal from a small number of
measurements.  The theoretical
foundation of compressed sensing was first laid out by \cite{Don06c}  and
\cite{CRT06} and can be traced further
back to the sparse recovery work of \cite{DS89, DH01,
DE03}.  More recent progress in the area of compressed sensing is summarized  in
\cite{EK12} and \cite{Ela10}.

Let  $\xb^{0} := (x^{0}_{1}, \ldots, x^{0}_{n})^{T}\in
\mathbb{R}^n$ denote a signal to be recovered.  We assume $n$ is large
and that $\xb^{0}$ is sparse.
Let $\Ab$ be a given (or chosen) $m \times n$ matrix with  $m < n$.
The  {\em compressed sensing problem} is to recover $\xb^{0}$ 
assuming only that we know $\yb = \Ab\xb^{0}$ and that $\xb^0$ is sparse.

Since  $\xb^{0}$  is  a sparse vector, one can hope that it is the sparsest
solution to the underdetermined linear system and therefore can be recovered
from  $\yb$ by solving
\[
({\rm P_0}) \qquad
\min_{\xb} \|\xb\|_0 \mbox{ subject to }  \Ab\xb = \yb,
\]
where $$\|\xb^{0}\|_0 := \#\{i : x_i \neq 0\}.$$
	
This problem is NP-hard due to the nonconvexity of the $0$-pseudo-norm.
To avoid the NP-hardness,   \cite{CDS98} proposed the {\em basis pursuit}
approach in which we use $\| \xb \|_{1}=\sum_j|x_j|$ to replace $\|\xb\|_{0}$:
\begin{eqnarray}
({\rm P_1}) \qquad
\min_{\xb} \|\xb\|_1 \mbox{ subject to }  \Ab\xb = \yb. \label{eq::P1}
\end{eqnarray}
\cite{DE03} and \cite{CDD09} have given conditions under which the solutions
to $({\rm P_{0})}$ and $({\rm P_{1}})$ are unique. 

One key question is: under what conditions are the solutions to 
$({\rm P_0})$ and $({\rm P_1})$ the same?
Various sufficient conditions have been 
discovered. 
For example, letting $\Ab_{*S}$ denote 
the submatrix of $\Ab$ with columns
indexed by a subset 
$S\subset \{1,\ldots, n\}$,
we say that $\Ab$  has the
$k$-{\em restricted isometry property ($k$-RIP)} with constant $\delta_{k}$ if
for any 
$S$ 
with cardinality $k$,
\begin{eqnarray}
(1-\delta_{k})\|\vb\|^{2}_{2} \leq \|\Ab_{*S}\vb\|^{2}_{2}
                              \leq (1+\delta_{k})\|\vb\|^{2}_{2}
			           ~~\text{for any}~\vb\in \mathbb{R}^{k},
\end{eqnarray}
where $\|\vb\|_2=\sqrt{\sum_{j=1}^n v_j^2}$.

We denote $\delta_{k}(\Ab)$ to be the smallest value of $\delta_k$ for which the
matrix $\Ab$ has the $k$-RIP property.  Under  the assumption that  $k:=\|\xb^{0}\|_{0}
\ll n$ and that  $\Ab$ satisfies the $k$-RIP condition,
	\cite{Cai2012} prove that whenever $\delta_{k}(\Ab)<1/3$,
the solutions to $({\rm P_{0})}$ and $({\rm P_{1})}$ are the same. Similar results have been obtained by  \cite{DT05a,DT05b, DT09} using convex geometric functional analysis. 

Existing algorithms for solving the convex program $({\rm P_{1})}$  include
interior-point methods \citep{CRT06, Boyd07}, projected gradient methods \citep{FNW07},
	and Bregman iterations \citep{Yin:2008}.  Besides solving the convex
	program $({\rm P_{1}})$, several greedy algorithms have been proposed,
	including matching pursuit \citep{MZ93} and its many variants
	\citep{Tro04,DET06,NV09,NT08,DMM09}. To achieve more scalability,
	combinatorial algorithms such as  HHS pursuit \citep{GSV07} and a
	sub-linear Fourier transform \citep{Iwe10} have also been developed.

In this paper, we revisit the optimization aspects of the classical compressed sensing formulation $({\rm P_{1}})$ and one of its extensions named Kronecker compressed sensing \citep{Duarte12}.  We consider two ideas for accelerating iterative algorithms---one can reduce the total number
of  iterations and one can reduce the  computation required to do one iteration.  The first method is competitive when $\xb^0$ is very sparse
whereas the second method is competitive  when it is somewhat less sparse. We back up these results by numerical simulations.

%
Our first idea is motivated by the fact that
the desired solution is sparse and therefore should require
only a relatively small number of simplex
pivots to find, starting from an appropriately chosen starting point---the zero
vector.
If we use
the {\em parametric simplex method} (see, e.g., \cite{Van07}) then it is easy to
take the zero vector as the starting basic solution.

The second method requires a new sensing scheme. More specifically, we stack the signal vector $\xb$ into a matrix $\Xb$
and then multiplying the matrix signal on both the left and the right sides to get a
compressed matrix signal.  Of course, with this method we are changing the
problem itself since it is generally not the case that the original $\Ab$ matrix can be
represented as a pair of multiplications performed on the matrix associated with
$\xb$.  But, for many compressed sensing problems, it is fair game to redesign
the multiplication matrix as needed for efficiency and accuracy.  Anyway, 
this idea allows one to formulate the linear programming problem in such a way
that the constraint matrix is very sparse and therefore the problem can be
solved very efficiently.
This  results in a {\em Kronecker compressed sensing} (KCS) problem which has been
considered before (see \cite{Duarte12}) although we believe that the sparse representation of
the linear programming matrix is new.

Theoretically, KCS involves a tradeoff between  computational
complexity and informational complexity: it gains computational advantages at the
price of requiring more measurements (i.e., larger $m$). 
More specifically, in later sections, we show that, using sub-Gaussian random sensing matrices, whenever
\begin{eqnarray}
m\geq 225 k^2(\log(n/k^{2}))^2,
\end{eqnarray}
we recover the true signal with probability at least $1-4\exp(-0.1\sqrt{m})$.
It is easy to see that this scaling of $(m, n, k)$  is tight by considering the special case when all the nonzero entries of $\xb$ form a continuous block.

The rest of the paper is organized as follows. 
In the next section, we describe how to solve the vector version of the sensing
problem $({\rm P}_1)$ using the parametric simplex method.
Then, in Section 3, we describe
the main idea behind Kronecker compressed sensing (KCS).  
Numerical comparisons and discussion are provided in Section 4. 

\section{Vector Compressed Sensing via the Parametric Simplex Method}

Consider the following parametric perturbation to $({\rm P_1})$:
\begin{eqnarray} \label{eq::parametric}
    \hat{\xb} := \argmin_{\xb} \; \| \xb \|_1
			    +  \lambda \| \bepsilon \|_1
    \\
	     \begin{array}{rcl}
	     ~~\text{subject to}~~
	      \Ab \xb + \bepsilon &  =  & \yb 
	     \end{array}
	     \nonumber
\end{eqnarray}
where we introduced a parameter, $\lambda$.  Clearly for $\lambda = 0$ this
problem has a trivial solution:  $\hat{\xb} = \zero$.  And, as $\lambda$
approaches infinity, the solution approaches the solution of our original
problem $({\rm P_1})$.  In fact, for all values of $\lambda$ greater than some
finite value, we get the solution to our problem.

We could solve the problem with the parameter $\lambda$ as shown, but we prefer
to start with large values of the parameter and decrease it to zero.  So, we let
$\mu = 1/\lambda$ and consider this parametric formulation:
\begin{eqnarray} \label{eq::parametric}
    \hat{\xb} := \argmin_{\xb} \; \mu \| \xb \|_1
			    +  \| \bepsilon \|_1
    \\
	     \begin{array}{rcl}
	     ~~\text{subject to}~~
	      \Ab \xb + \bepsilon &  =  & \yb .
	     \end{array}
	     \nonumber
\end{eqnarray}
For large values of $\mu$, the optimal
solution has $\hat{\xb} = \zero$ and
$\hat{\bepsilon} = \yb$.  For values of
$\mu$ close to zero, the situation reverses: $\hat{\bepsilon} = \zero$.

Our aim is to reformulate this problem as a parametric linear programming
problem and
solve it using the parametric simplex method (see, e.g., \cite{Van07}). In particular, we set parameter
$\mu$ to start at $\mu = \infty$ and successively reduce the value of $\mu$
for which the current basic solution is optimal until arriving at a value of
$\mu$ for which the optimal solution has
$\hat{\bepsilon} = 0$ at which point we will have solved the original problem.
If the number of pivots are few, then the final vector $\hat{\xb}$ will be mostly
zero.

It turns out that the best way to reformulate the optimization problem in \eqref{eq::parametric} as a linear
programming problem is to split each variable into a difference between two nonnegative variables,
\[
\xb = \xb^+ - \xb^-
~~\text{and}~~
\bepsilon = \bepsilon^+ - \bepsilon^- ,
\]
where the entries of $\xb^{+}, \xb^{-}, \bepsilon^{+}, \bepsilon^{-}$  are all nonnegative. 

The next step is to replace $\| \xb \|_1$ with $\one^T (\xb^+ + \xb^-)$ and to make
a similar substitution for $\| \bepsilon \|_1$.   In general, the sum does not
equal the absolute value but it is easy to see that it does at optimality.
Here is the reformulated linear programming problem:
\begin{eqnarray*} \label{eq::parametric2}
	\min_{\xb^+, \xb^-, \bepsilon^+, \bepsilon^-}
	   \; \mu \one^T (\xb^+ + \xb^-)
	        + \one^T (\bepsilon^+ + \bepsilon^-) \\
	     \begin{array}{rcl}
	     ~~\text{subject to}~~ \hspace*{0.2in}
		\Ab (\xb^+ - \xb^-) + (\bepsilon^+ - \bepsilon^-)
		&  =  & \yb
	\\
	      \xb^+, \xb^-, \bepsilon^+, \bepsilon^- & \ge & 0.
	     \end{array}
\end{eqnarray*}
For $\mu$ large, the optimal solution has $\xb^+ = \xb^- = 0$, and
$\bepsilon^+ - \bepsilon^- = \yb$.  And, given that these latter variables are
required to be nonnegative, it follows that
\[
    y_i > 0 ~ \Longrightarrow ~ \epsilon^+_i > 0 ~ \text{and} ~ \epsilon^-_i = 0
\]
whereas
\[
    y_i < 0 ~ \Longrightarrow ~ \epsilon^-_i > 0 ~ \text{and} ~ \epsilon^+_i = 0
\]
(the equality case can be decided either way).
With these choices for variable values, the solution is feasible for all $\mu$
and is optimal for large $\mu$.  Furthermore, declaring the nonzero variables to
be {\em basic} variables and the zero variables to be {\em nonbasic}, we see
that this optimal solution is also a basic solution and can therefore serve as a
starting point for the parametric simplex method.

Throughout the rest of this paper, we refer to the problem described here as the
vector compressed sensing problem.

\section{Kronecker Compressed Sensing}

In this section, we introduce the Kronecker compressed sensing problem \citep{Duarte12}.
Unlike the classical compressed sensing problem which mainly focuses on 
vector signals, Kronecker compressed sensing can be used for sensing
multidimensional signals (e.g., matrices or tensors).  For example, given a sparse
matrix signal $\Xb^{0}\in \mathbb{R}^{n_{1}\times n_{2}}$, we can use two
sensing matrices $\Ab\in \mathbb{R}^{m_{1}\times n_{2}}$ and
$\Bb\in\mathbb{R}^{m_{2}\times n_{2}}$ and try to recover $\Xb^{0}$ from 
knowledge of $\Yb=\Ab\Xb^{0}\Bb^{T}$. It is clear that when the signal is
multidimensional, Kronecker compressed sensing is more natural than 
classical vector compressed sensing. Here, we would like to point out that,
sometimes even when facing vector signals, it is still beneficial to
use Kronecker compressed sensing due to its added computational
efficiency.

More specifically, even though the target signal is a vector
$\xb^{0}\in\mathbb{R}^{n}$, we may first stack it into a matrix
$\Xb^{0}\in\mathbb{R}^{n_{1}\times n_{2}}$  by putting each length $n_{1}$
sub-vector of $\xb^{0}$ into a column of $\Xb^{0}$. Here, without loss of generality,  we assume
$n=n_{1}\times n_{2}$.  We then multiply the matrix signal $\Xb^{0}$ on both the
left and the right by sensing matrices $\Ab$ and $\Bb$ to get a compressed
matrix signal $\Yb^{0}$. In the next section, we will show that 
we are able to solve this
Kronecker compressed sensing problem much more efficiently than the 
vector compressed sensing problem.  

When discussing matrices, 
we let $\|\Xb\|_{0} = \sum_{j,k} {\mathbf 1}(x_{jk}\neq 0)$ and
$\|\Xb\|_{1}:=\sum_{j,k}|x_{jk}|$.

Given a matrix $\Yb\in\mathbb{R}^{m_{1}\times m_{2}}$ and the sensing matrices 
$\Ab$ and $\Bb$, our goal is to recover the original sparse signal $\Xb^{0}$  
by solving the following optimization problem:
\begin{eqnarray}
({\rm P_2}) \qquad
\hat{\Xb} = \argmin\|\Xb\|_{1}~~\text{subject to}~~\Ab\Xb\Bb^{T} = \Yb. \label{eq::coupled}
\end{eqnarray}
Here, $\Ab$ and $\Bb$ are sensing matrices of size
$m_1\times n_1$ and $m_2\times n_2$, respectively.
Let $\xb = \mathrm{vec}(\Xb)$ and $\yb = \mathrm{vec}(\Yb)$, where the
$\mathrm{vec}()$ operator takes a matrix and concatenates its elements column-by-column
to build one large column-vector containing all the elements of the matrix.
In terms of $\xb$ and $\yb$, problem $({\rm P_2})$ can be rewritten as
\begin{eqnarray}
\mathrm{vec}(\hat\Xb)=\argmin\|\xb\|_{1}~~\text{subject to}~~\Ub\xb = \yb, \label{eq::coupled2}
\end{eqnarray}
where $\Ub$ is given by the $(m_1m_2)\times (n_1n_2)$ Kronecker product of $\Ab$ and $\Bb$:
\begin{displaymath}
    \Ub
    :=
    \Bb\otimes\Ab
    = 
    \left[ 
        \begin{array}{ccc}
	    \Ab b_{1 1}  & \cdots & \Ab b_{1n_2} \\
	          \vdots  & \ddots & \vdots        \\
	    \Ab b_{m_21} & \cdots & \Ab b_{m_2 n_2} 
	\end{array} 
    \right].
\end{displaymath}
In this way, (\ref{eq::coupled}) becomes a vector compressed sensing problem.

To analyze the properties of this  Kronecker sensing approach, we recall
the definition of the restricted isometry constant for a matrix. For any $m\times n$
matrix $\Ub$, the $k$-restricted isometry constant $\delta_k(\Ub)$ is
defined as the smallest nonnegative number such that for any $k$-sparse vector
$\bh\in R^n$,
\begin{eqnarray}
(1-\delta_k(\Ub))\|\bh\|_2^2\leq \|\Ub\bh\|_2^2\leq(1+\delta_{k}(\Ub))\|\bh\|_2^2.
\end{eqnarray}

Based on the results in \cite{Cai2012}, we have
\begin{lemma}[\cite{Cai2012}]\label{lemma::cai12}
Suppose $k=\|\Xb^{0}\|_{0}$ is the sparsity of matrix $\Xb^{0}$. Then if $\delta_k(\Ub)<1/3$, we have $\mathrm{vec}(\hat\Xb)=\xb^{0}$ or equivalently  $\hat\Xb=\Xb^{0}$.
\end{lemma}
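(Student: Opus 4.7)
The plan is to reduce the matrix recovery problem $(\mathrm{P}_2)$ to a vector $\ell_1$-minimization problem to which the result of \cite{Cai2012} (quoted in the introduction) applies directly. The work amounts almost entirely to bookkeeping via the vectorization operator, so I do not expect any substantial obstacle.

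First I would use the standard Kronecker-product identity $\mathrm{vec}(\Ab \Xb \Bb^T) = (\Bb \otimes \Ab)\,\mathrm{vec}(\Xb) = \Ub\,\mathrm{vec}(\Xb)$, which is already recorded in the paper immediately before the statement. This identifies the feasible sets of the matrix problem $(\mathrm{P}_2)$ and the vector problem \eqref{eq::coupled2}: $\Xb$ satisfies $\Ab \Xb \Bb^T = \Yb$ if and only if $\xb := \mathrm{vec}(\Xb)$ satisfies $\Ub \xb = \yb$.

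Next I would observe that the objective is preserved: by the definitions $\|\Xb\|_1 = \sum_{j,k}|x_{jk}| = \|\mathrm{vec}(\Xb)\|_1$, and similarly $\|\Xb\|_0 = \|\mathrm{vec}(\Xb)\|_0$. Hence $\mathrm{vec}(\hat\Xb)$ is precisely the minimizer of the vector $\ell_1$ problem $\min \|\xb\|_1$ subject to $\Ub \xb = \yb$, and the true signal $\xb^0 := \mathrm{vec}(\Xb^0)$ is $k$-sparse with $k = \|\Xb^0\|_0$ and satisfies $\Ub \xb^0 = \yb$.

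Finally, since $\delta_k(\Ub) < 1/3$ by hypothesis, I would invoke the Cai--Zhang theorem quoted in the introduction: under this RIP bound, the unique solution to the vector $\ell_1$ problem coincides with the sparsest feasible vector, which must be $\xb^0$. This gives $\mathrm{vec}(\hat\Xb) = \xb^0$, and undoing the vectorization yields $\hat\Xb = \Xb^0$. The only thing worth double-checking is that the Cai--Zhang statement guarantees uniqueness of the $\ell_1$-minimizer (not merely equality of optimal values), so that $\hat\Xb$ is unambiguously defined; this is indeed what their theorem provides under $\delta_k < 1/3$.
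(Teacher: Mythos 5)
Your proposal is correct and matches the paper's intent exactly: the paper itself gives no proof beyond vectorizing $(\mathrm{P}_2)$ into the equivalent vector problem \eqref{eq::coupled2} and citing the Cai--Zhang result, which is precisely the reduction you carry out. Your additional care about the preservation of the $\ell_1$ and $\ell_0$ norms under $\mathrm{vec}(\cdot)$ and the uniqueness of the $\ell_1$ minimizer is sound bookkeeping that the paper leaves implicit.
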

For the value of $\delta_k(\Ub)$, by lemma 2 of \cite{Duarte12}, we know that
\begin{eqnarray}
1+\delta_k(\Ub)\leq (1+\delta_k(\Ab))(1+\delta_k(\Bb)). \label{eq::KroneckerRIP}
\end{eqnarray}
In addition, we define strictly a sub-Gasusian distribution as follows:
\begin{definition}[Strictly Sub-Gaussian Distribution]
We say a mean-zero random variable $X$ follows a {\em strictly sub-Gaussian
	distribution} with variance $1/m$ if it satisfies
\begin{itemize}
\item $\mathbb{E}X^{2} =\ds \frac{1}{m}$,
\item $\mathbb{E}\exp\left( tX \right) \leq \exp\left(\ds \frac{t^{2}}{2m} \right)$ for all ~$t\in\mathbb{R}$.
\end{itemize}
\end{definition}
It is obvious that the Gaussian distribution with mean $0$ and variance $1/m^{2}$ satisfies the above definition. The next theorem provides sufficient conditions that guarantees perfect recovery of the  KCS problem with a desired probability.
\begin{theorem} \label{thm::kcs}
Suppose matrices $\Ab$ and $\Bb$ are both generated by independent strictly sub-Gaussian entries with variance $1/m$.  Let $C>28.1$ be a constant.  Whenever
\begin{eqnarray}
m_{1}\geq C\cdot k \log\left(n_{1}/k \right)~~\text{and}~~m_{2}\geq C\cdot k \log\left(n_{2}/k \right),
\end{eqnarray}
the convex program $({\rm P_{2}})$ attains perfect recovery with probability
\begin{eqnarray}
\mathbb{P}\biggl(\hat{\Xb} = {\Xb}^{0} \biggr)\geq 1-\underbrace{2\exp\left( -\Bigl(0.239-\frac{6.7}{C}\Bigr) m_{1}\right)-2\exp\left( -\Bigl(0.239-\frac{6.7}{C}\Bigr) m_{2}\right)}_{\rho(m_{1}, m_{2})}.
\end{eqnarray}
\end{theorem}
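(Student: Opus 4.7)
The plan is to chain Lemma \ref{lemma::cai12}, the Kronecker RIP bound \eqref{eq::KroneckerRIP}, and a standard $\varepsilon$-net concentration inequality for sub-Gaussian matrices. Lemma \ref{lemma::cai12} tells us it suffices to show $\delta_k(\Ub) < 1/3$ with the claimed probability. By \eqref{eq::KroneckerRIP}, $1+\delta_k(\Ub)\le (1+\delta_k(\Ab))(1+\delta_k(\Bb))$, so it suffices to force each factor $1+\delta_k(\Ab), 1+\delta_k(\Bb)\le \sqrt{4/3}$, i.e.\ to show that
\[
\delta_k(\Ab)\le \delta^{*} \quad\text{and}\quad \delta_k(\Bb)\le \delta^{*}, \qquad \delta^{*}:=\sqrt{4/3}-1\approx 0.1547,
\]
since then $(1+\delta^{*})^{2}=4/3$ and $\delta_k(\Ub)\le 1/3$. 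A single union bound over the events for $\Ab$ and $\Bb$ will then produce the final probability estimate.

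For each matrix individually I would invoke the standard RIP concentration bound for matrices with independent strictly sub-Gaussian entries of variance $1/m$. The route is the usual one: fix a $k$-subset $S\subset\{1,\dots,n\}$, build an $\varepsilon$-net of resolution $\varepsilon=1/4$ on the unit sphere of $\R^{k}$ (which has cardinality at most $(1+2/\varepsilon)^{k}=9^{k}$), apply a Bernstein/Hanson--Wright type tail bound to $\|\Ab_{*S}\vb\|_{2}^{2}$ for each net point, and extend from the net to the sphere at the cost of a constant factor. Summing over the $\binom{n}{k}\le (en/k)^{k}$ choices of $S$ yields a bound of the form
\[
\P\bigl(\delta_k(\Ab)>\delta^{*}\bigr)\le 2\exp\!\Bigl(-c_{1}(\delta^{*})\,m_{1}+k\log(en_{1}/k)+k\log 9\Bigr),
\]
and analogously for $\Bb$. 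Setting $m_{1}\ge C\cdot k\log(n_{1}/k)$ absorbs the combinatorial factors into the exponent and leaves a linear-in-$m_{1}$ rate of the form $\bigl(c_{1}(\delta^{*})-c_{2}/C\bigr)m_{1}$.

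The main obstacle—and the only really delicate part—is squeezing out the explicit constants $0.239$ and $6.7/C$. This requires (i) using the strict sub-Gaussian hypothesis in the form of the moment-generating bound $\mathbb{E}\exp(tX)\le\exp(t^{2}/(2m))$ to get an explicit quadratic-exponential tail for $\|\Ab_{*S}\vb\|_{2}^{2}-\|\vb\|_{2}^{2}$, (ii) optimizing the $\varepsilon$-net resolution so that the net-to-sphere inflation factor $1/(1-2\varepsilon)$ and the net size $(1+2/\varepsilon)^{k}$ together minimize the exponent at $\delta=\delta^{*}$, and (iii) keeping track of which constant gets absorbed into the linear term $0.239\,m_{1}$ and which into the $k\log(en_{1}/k)$ overhead that is controlled by $C$. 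Once this bookkeeping is done, a union bound over $\{\delta_k(\Ab)>\delta^{*}\}$ and $\{\delta_k(\Bb)>\delta^{*}\}$ yields $\rho(m_{1},m_{2})$ exactly as stated.

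Finally, combining the three ingredients: on the complementary event of probability at least $1-\rho(m_{1},m_{2})$ we have $\delta_k(\Ub)<1/3$, and Lemma \ref{lemma::cai12} applied to $\Ub$, $\xb=\mathrm{vec}(\Xb^{0})$, $\yb=\mathrm{vec}(\Yb)$ gives $\mathrm{vec}(\hat{\Xb})=\mathrm{vec}(\Xb^{0})$, i.e.\ $\hat{\Xb}=\Xb^{0}$. I expect the whole proof to be short conditional on the sub-Gaussian RIP lemma, with essentially all the work hidden in the constant-tracking of step (ii) above.
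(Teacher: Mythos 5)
Your proposal follows exactly the paper's route: reduce via Lemma \ref{lemma::cai12} and the product bound \eqref{eq::KroneckerRIP} to showing $\delta_k(\Ab),\delta_k(\Bb) < 2/\sqrt{3}-1$ (your $\delta^{*}$), then apply a sub-Gaussian RIP concentration bound to each matrix and a union bound; the paper simply cites Theorem 3.6 of Baraniuk et al.\ ``with a careful calculation of constants'' for the step you sketch via the $\varepsilon$-net argument. The only cosmetic fix is to keep the inequalities strict ($\delta_k(\Ab)<\delta^{*}$ rather than $\le$) so that the hypothesis $\delta_k(\Ub)<1/3$ of Lemma \ref{lemma::cai12} is literally satisfied.
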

\begin{proof}
From Equation \eqref{eq::KroneckerRIP} and  Lemma \ref{lemma::cai12}, it suffices to show that
\begin{eqnarray}
\mathbb{P}\left(\delta_{k}(\Ab) < \frac{2}{\sqrt{3}}-1~\text{and}~\delta_{k}(\Bb) < \frac{2}{\sqrt{3}}-1\right) \geq 1 - \rho(m_{1}, m_{2}).
\end{eqnarray}
This result directly follows from  Theorem 3.6 of \cite{Baraniuk10} with a careful calculation of constants.
\qed
\end{proof}
From the above theorem, we see that for $m_1=m_2=\sqrt{m}$ and $n_1=n_2=\sqrt{n}$, whenever the number of measurements satisfies
\begin{eqnarray} \label{eq::kcsrate}
m\geq 225 k^2(\log(n/k^{2}))^2,
\end{eqnarray}
we have $\hat{\Xb} = {\Xb}^{0}$ with probability at least $1-4\exp(-0.1\sqrt{m})$.

Here we compare the above result to that of vector compressed sensing, i.e.,
     instead of stacking the original signal $\xb^{0}\in\mathbb{R}^{n}$ into a
     matrix, we directly use a strictly sub-Gaussian sensing matrix
     $\Ab\in\mathbb{R}^{m\times n}$ to multiply on $\xb^{0}$ to get $\yb =
     \Ab\xb^{0}$. We then plug $\yb$ and $\Ab$ into the convex program $({\rm
		     P_{1}})$ in Equation \eqref{eq::P1}  to recover $\xb^{0}$. Following the same argument as in Theorem \ref{thm::kcs},  whenever
\begin{eqnarray}\label{eq::csrate}
m \geq 30k\log\left(n/k\right),
\end{eqnarray}
we have $\hat{\xb} = {\xb}^{0}$ with probability at least $1-2\exp(-0.1m)$.
Comparing \eqref{eq::csrate} with \eqref{eq::kcsrate}, we see that KCS requires more stringent conditions for perfect recovery.

\section{Sparsifying the Constraint Matrix}

The key to efficiently solving the linear programming problem associated with the Kronecker
sensing problem lies in noting that the dense matrix $\Ub$ can be
factored into a product of two very sparse matrices:
\[
    \Ub 
    = 
    \left[ 
        \begin{array}{ccc}
	    \Ab b_{1 1}  & \cdots & \Ab b_{1n_2} \\
	          \vdots  & \ddots & \vdots        \\
	    \Ab b_{m_21} & \cdots & \Ab b_{m_2 n_2} 
	\end{array} 
    \right]
    = 
    \left[ 
        \begin{array}{cccc}
	  \;\Ab \;  & \; \Ze \;   & \cdots & \; \Ze \\
	    \Ze     &    \Ab      & \cdots &    \Ze  \\
	    \vdots  &    \vdots   & \ddots &  \vdots \\
	    \Ze     &    \Ze      & \cdots &   \Ab
	\end{array} 
    \right]
    \left[ 
        \begin{array}{cccc}
	     b_{1 1} \Ib   &  b_{1 2}   \Ib & \cdots &  b_{1 n_2} \Ib \\
	     b_{2 1} \Ib   &  b_{2 2}   \Ib & \cdots &  b_{2 n_2} \Ib \\
	       \vdots       & \vdots          & \ddots & \vdots          \\
	     b_{m_2 1} \Ib &  b_{m_2 1} \Ib & \cdots &  b_{m_2 n_2} \Ib
	\end{array} 
    \right]
    =:
    \Vb \Wb
    ,
\]
where $\Ib$ denotes a $n_1 \times n_1$ identity matrix and $\Ze$ denotes a 
$m_1 \times m_1$ zero matrix.
The constraints on the problem are 
\[
    \Ub \xb + \bepsilon = \yb .
\]
The matrix $\Ub$ is usually completely dense.  But, it is a product of two very
sparse matrices: $\Vb$ and $\Wb$.  Hence, introducing some new variables, call
them $\zb$, we can rewrite the constraints like this:
\[
    \begin{array}{ccccccl}
        \zb     & - & \Wb \xb &   &           & = & 0 \\
        \Vb \zb &   &         & + & \bepsilon & = & \yb .
    \end{array}
\]
And, as before, we can split $\xb$ and $\bepsilon$ into a difference between their
positive and negative parts to convert the problem to a linear program:
\begin{eqnarray*} \label{eq::parametric2}
	\min_{\xb^+, \xb^-, \bepsilon^+, \bepsilon^-}
	   \; \mu \one^T (\xb^+ + \xb^-)
	        + \one^T (\bepsilon^+ + \bepsilon^-) \\
	     \begin{array}{rcccccl}
	     ~~\text{subject to}~~ \hspace*{0.2in}
		\zb     & - & \Wb (\xb^+ - \xb^-) &   &                            &  =  & 0 \\
		\Vb \zb &   &                     & + & (\bepsilon^+ - \bepsilon^-)&  =  & \yb \\
	      &&&& \xb^+, \xb^-, \bepsilon^+, \bepsilon^- & \ge & 0.
	     \end{array}
\end{eqnarray*}

This formulation has more variables and more constraints.  But, the constraint
matrix is very sparse.   For linear programming, sparsity of the constraint
matrix is a significant contributor to algorithm efficiency
(see \cite{Van90e}).

\section{Numerical Results}

For the vector sensor, we generated random problems using $m=1,\!122 \; = 33
		\times 34$ and
$n=20,\!022 \; = 141 \times 142$.
We varied the number of nonzeros $k$ in signal $\xb^0$ from $2$ to $150$.
We solved the straightforward linear
programming formulations of these instances using an interior-point solver
called {\sc loqo} (\cite{Van94f}).
We also solved a large number of instances of the parametrically formulated
problem using the parametric simplex method as outlined above.

\begin{figure}[ht!]
\begin{center}
\includegraphics[width=6in]{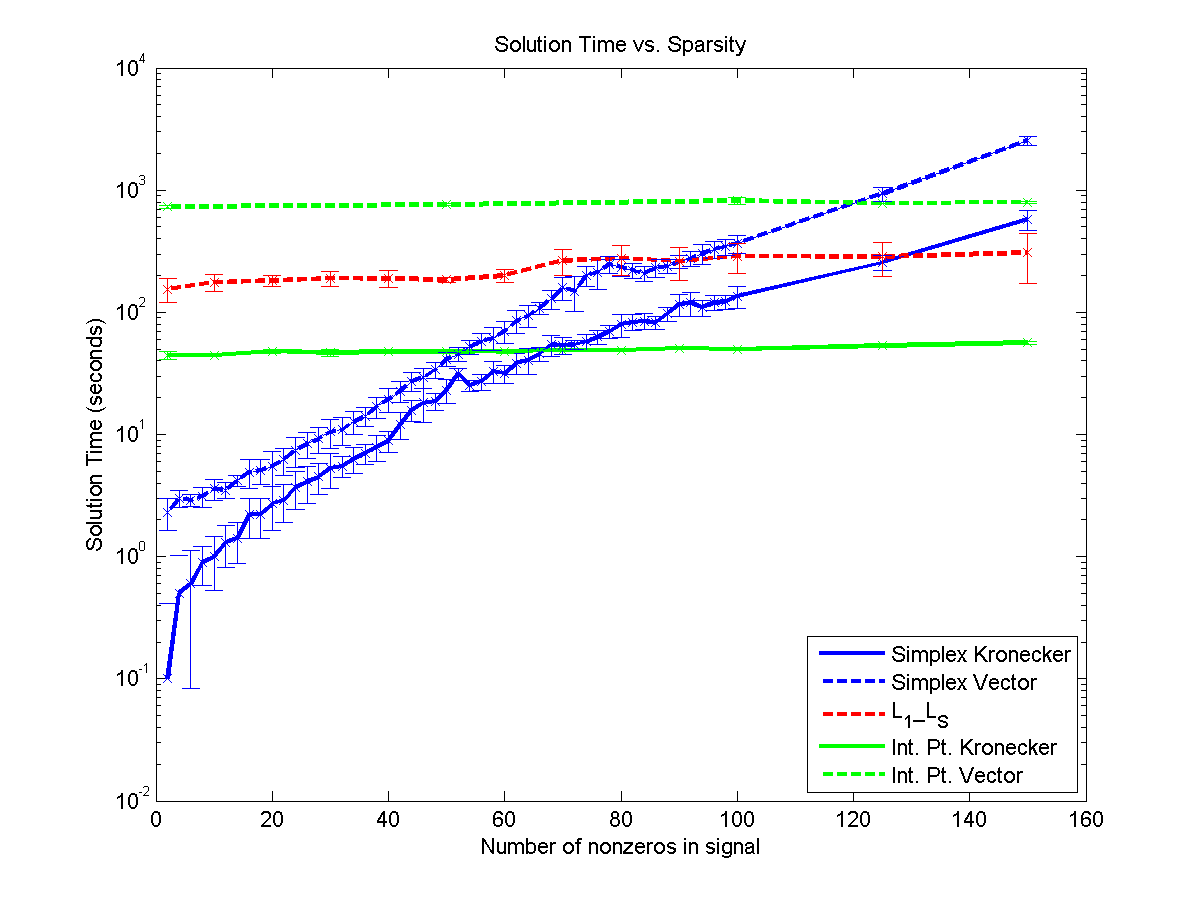}
\end{center}
\caption{Solution times for a large number of problem instances having
$m = 1,\!122$, $n = 20,\!022$, and various degrees of sparsity in
the underlying signal.  The horizontal axis shows the number of
nonzeros in the signal.  The vertical axis gives a semi-log
scale of solution times. The error bars have lengths equal to one standard deviation based on the multiple trials. }
\label{fig1}
\end{figure}

We followed a similar plan for the Kronecker (Matrix) sensor.  For these
problems, we used $m_1 = 33$, $m_2 = 34$, $n_1 = 141$, $n_2 = 142$, and
various values of $k$.  Again, the straightforward linear programming problems
were solved by {\sc loqo} and the parametrically formulated versions were solved
by a custom developed parametric simplex method.

For the Kronecker sensing problems, the matrices $\Ab$ and $\Bb$ were generated so
that their elements are independent standard Gaussian random variables.  For the
vector sensing problems, the corresponding matrix $\Ub$ was used.

We also ran the publicly-available, state-of-the-art $l_1\_l_s$ code 
(see \cite{Boyd07}).

The results are shown in Figure \ref{fig1}.
The interior-point solver ({\sc loqo}) applied to the Kronecker sensing problem
is uniformly faster than both $l_1\_l_s$ and the interior-point solver applied
to the vector problem (the three horizontal lines in the plot).
For very sparse problems, the parametric simplex method is best.  In particular,
    for $k \le 70$, the parametric simplex method applied to the Kronecker
    sensing problem is the fastest method.  It can be two or three orders of
    magnitude faster than $l_1\_l_s$.  But, as explained earlier, the Kronecker
    sensing problem involves changing the underlying problem being solved.  If
    one is required to stick with the vector problem, then it too is the best
    method for $k \le 80$ after which the $l_1\_l_s$ method wins.

Instructions for downloading and running the various codes/algorithms described
herein can be found at
\url{http://www.orfe.princeton.edu/~rvdb/tex/CTS/kronecker_sim.html}.



\section{Conclusions}

We revisit compressed sensing from an optimization perspective. We advocate the
usage of  the parametric simplex algorithm for solving large-scale compressed
sensing problem.  The parametric simplex is a homotopy algorithm and enjoys many
good computational properties. We also propose two alternative ways for
compressed sensing which illustrate a tradeoff between computing and statistics.
In future work, we plan to extend the proposed method to the setting of 1-bit
compressed sensing.

------------------------------------------
------------------------------------------

\bibliographystyle{ims}
\bibliography{./local}


\end{document}